\title{Non-Exhaustive, Overlapping Co-Clustering: An Extended Analysis}
\author[1]{Joyce Jiyoung Whang}
\author[2]{Inderjit S. Dhillon}
\affil[1]{Dept. of Computer Science and Engineering\\
Sungkyunkwan University\\
jjwhang@skku.edu}
\affil[2]{Dept. of Computer Science\\
The University of Texas at Austin\\
inderjit@cs.utexas.edu}
\providecommand{\keywords}[1]{\textbf{\textit{Index Terms---}} #1}
\date{}
\newtheorem{mythm}{Theorem}
\newtheorem{mylem}{Lemma}
\DeclareMathOperator{\trace}{trace}
\def\clap#1{\hbox to 0pt{\hss#1\hss}}
\newcommand{\mat}[1]{\boldsymbol{#1}}
\providecommand{\mH}{\ensuremath{\mat{H}}}
\providecommand{\mI}{\ensuremath{\mat{I}}}
\providecommand{\mM}{\ensuremath{\mat{M}}}
\providecommand{\mU}{\ensuremath{\mat{U}}}
\providecommand{\mV}{\ensuremath{\mat{V}}}
\providecommand{\mX}{\ensuremath{\mat{X}}}
\newcommand\norm[1]{\left\lVert#1\right\rVert}
\begin{document}

\maketitle

\begin{abstract}
The goal of co-clustering is to simultaneously identify a clustering of rows as well as columns of a two dimensional data matrix. A number of co-clustering techniques have been proposed including information-theoretic co-clustering and the minimum sum-squared residue co-clustering method. However, most existing co-clustering algorithms are designed to find pairwise disjoint and exhaustive co-clusters while many real-world datasets contain not only a large overlap between co-clusters but also outliers which should not belong to any co-cluster. In this paper, we formulate the problem of Non-Exhaustive, Overlapping Co-Clustering where both of the row and column clusters are allowed to overlap with each other and outliers for each dimension of the data matrix are not assigned to any cluster. To solve this problem, we propose intuitive objective functions, and develop an an efficient iterative algorithm which we call the NEO-CC algorithm. We theoretically show that the NEO-CC algorithm monotonically decreases the proposed objective functions. Experimental results show that the NEO-CC algorithm is able to effectively capture the underlying co-clustering structure of real-world data, and thus outperforms state-of-the-art clustering and co-clustering methods. This manuscript includes an extended analysis of~\cite{whang-cikm2017a}.
\end{abstract}

\keywords{co-clustering, clustering, overlap, outlier, k-means.}

\section{Introduction}
Clustering is one of the most important tools in unsupervised learning. Given a set of objects, each of which is represented by a feature vector, the goal of clustering is to group the objects into a certain number of clusters such that similar objects are assigned to the same cluster. Let us define a two dimensional data matrix such that each row of the matrix represents an object and each column represents an attribute or a feature of an object. In this setting, clustering algorithms are designed to cluster the rows of the data matrix. While most (one-way) clustering algorithms focus on clustering only one of the dimensions of the data matrix, it has been recognized that simultaneously clustering both dimensions of the data matrix is desirable to not only improve the clustering performance~\cite{dhillon-kdd03} but also detect more semantically meaningful clusters in many applications, e.g.,~\cite{daniel},~\cite{amela-bio06}, and~\cite{cheng-ismb00}.

The goal of co-clustering (also known as biclustering) is to simultaneously identify a clustering of the rows as well as the columns of the data matrix, i.e., group similar objects as well as similar attributes. The co-clustering problem has been studied in various applications such as gene expression data analysis~\cite{cheng-ismb00}, word-document clustering~\cite{dhillon01}, and market-basket analysis. Also, the co-clustering technique has been utilized in recommender systems~\cite{rec_icdm05},~\cite{rec_sigir05} where simultaneous clustering of users and items provides a way to develop a scalable collaborative filtering framework. 

\begin{figure}[t]
\centering
\begin{minipage}[b]{0.55\linewidth}\centering
  \centering
  \begin{tabular}{c}
    \subfloat[The original data matrix]{\includegraphics[width=\linewidth]{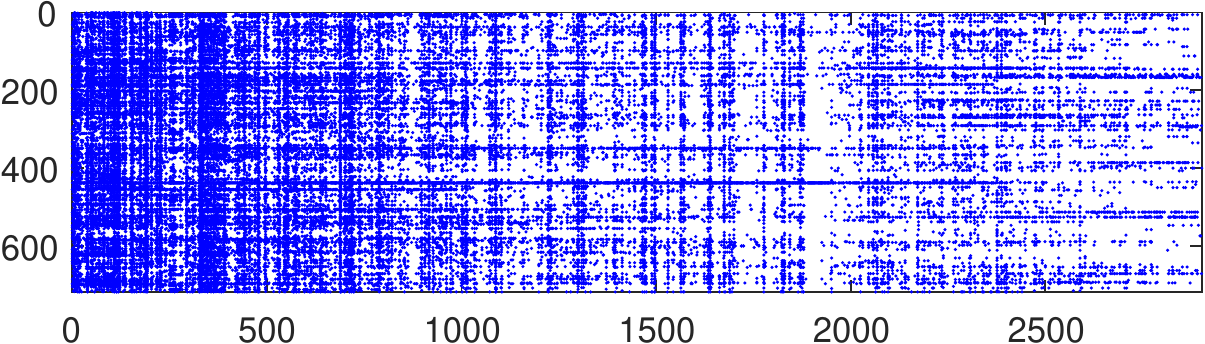}}  \\
    \subfloat[Rearrangement of the rows and the columns according to the output of the NEO-CC method]{\includegraphics[width=\linewidth]{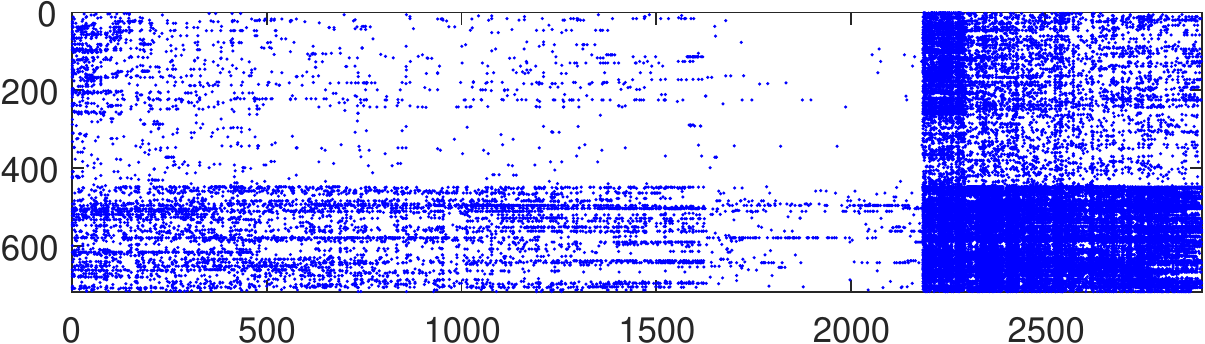}} 
\\
    \subfloat[The rearranged data matrix annotated with row and column clusters. The red and orange boxes indicate row clusters and the purple and green boxes indicate column clusters.]{\includegraphics[width=\linewidth]{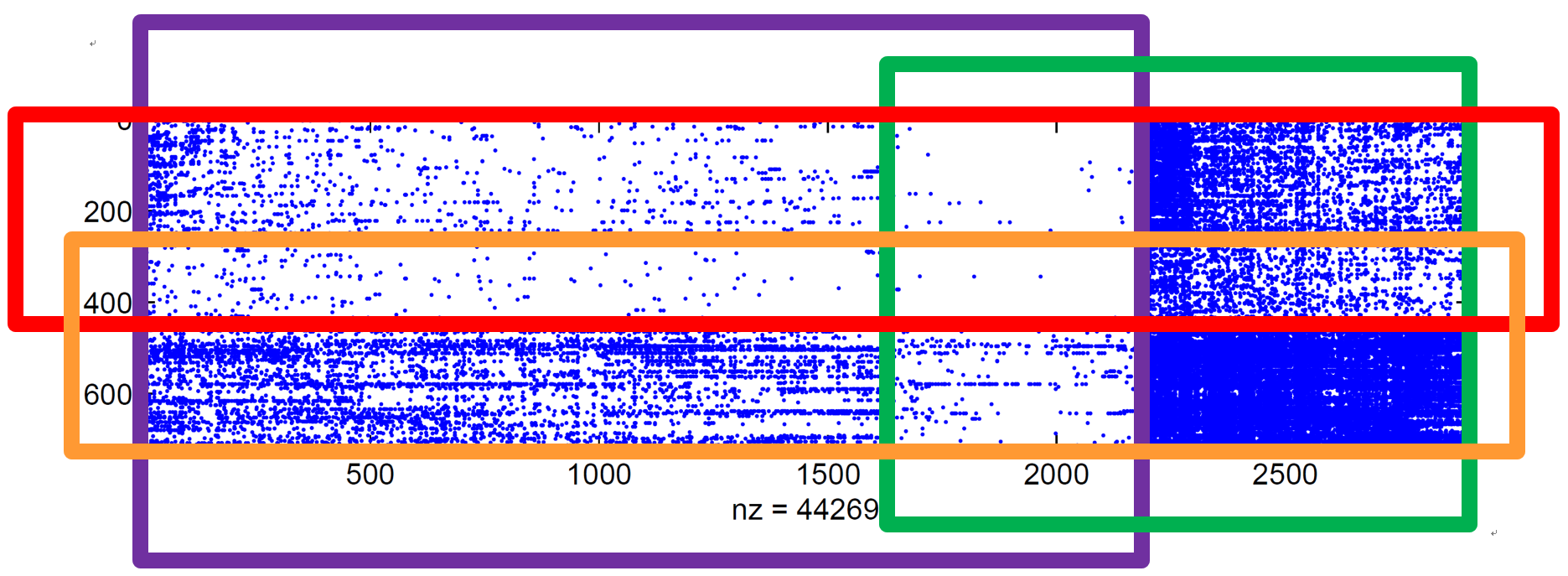}}
  \end{tabular}
\end{minipage}
\caption{Visualization of the co-clusters produced by the NEO-CC method.}
\label{fig:spy}
\end{figure}

Many different kinds of co-clustering methods have been proposed such as information-theoretic co-clustering~\cite{dhillon-kdd03}, the minimum sum-squared residue (MSSR) co-clustering~\cite{cho-sdm04}, and spectral co-clustering~\cite{dhillon01}. We note that most existing co-clustering methods are based on an assumption that every object belongs to exactly one row cluster and every attribute belongs to exactly one column cluster. However, this assumption hinders the existing co-clustering methods from correctly capturing the underlying co-clustering structure of data because in many real-world datasets, both of the row and column clusters can overlap with each other and there often exist outliers which should not belong to any cluster. 

For the one-way clustering problem, we recently proposed the NEO-K-Means method~\cite{whang-tpami2019} to identify overlapping clusters and outliers in a unified manner where the objective function and the algorithm were designed to cluster only the rows of a data matrix. Although the NEO-K-Means method has been shown to be effective in finding the ground-truth clusters~\cite{hou-whang-kdd2015}, how to extend this idea to the co-clustering problem is far from straightforward because of the complicated interactions between the rows and the columns of the data matrix where both the cluster overlap and the non-exhaustiveness are allowed for each dimension of the matrix. 

Inspired by the one-way NEO-K-Means clustering method~\cite{whang-sdm2015}, we propose the Non-Exhaustive, Overlapping Co-Clustering (NEO-CC) method to efficiently detect coherent co-clusters such that both of the row and column clusters are allowed to overlap with each other and the outliers for each dimension of the data matrix are not assigned to any cluster. Figure~\ref{fig:spy} shows the output of our NEO-CC method on a user-movie ratings dataset from~\cite{movie} (see Section~\ref{sec:exp} for more details about this dataset) where each row represents a user and each column represents a movie. In the data matrix, the non-zero elements are represented as blue dots. As shown in Figure~\ref{fig:spy}(a), when we simply visualize the original data matrix, it is hard to discover particular patterns of the matrix. In Figure~\ref{fig:spy}(b), we rearrange the rows and the columns according to the output of the NEO-CC method, and in Figure~\ref{fig:spy}(c), we explicitly annotate the row and column clusters. After we rearrange the rows and the columns, we can observe the co-clustering structure of the data matrix. The NEO-CC method discovers overlapping row and column clusters, and in our experiments, we notice that allowing the overlaps between the clusters significantly increases the accuracy of clustering (see the experimental results on ML1 dataset in Section~\ref{sec:exp}). Also, the NEO-CC method detects one outlier from the rows. When we look at the detected outlier, it corresponds to a user who randomly gives ratings to a number of movies without any particular pattern.

In this paper, we mathematically formulate the non-exhaustive, overlapping co-clustering problem, and propose two objective functions which we call the NEO-CC-M (NEO-CC by considering the Mean of co-clusters) objective and the NEO-CC-RCM (NEO-CC by considering the Row and Column Mean) objective. These objective functions seamlessly generalize the MSSR co-clustering~\cite{cho-sdm04} and the one-way NEO-K-Means clustering~\cite{whang-sdm2015} objectives. To optimize our NEO-CC objective functions, we propose a simple iterative algorithm called the NEO-CC algorithm. We theoretically show that the NEO-CC algorithm monotonically decreases the NEO-CC objective functions. Our objective functions and the algorithm allow us to solve the non-exhaustive, overlapping co-clustering problem in a principled way. Our experimental results show that the NEO-CC algorithm is able to effectively capture the underlying co-clustering structure of real-world data, and thus outperforms state-of-the-art clustering and co-clustering methods in terms of discovering the ground-truth clusters.

\section{Related Work}
There have been various approaches to solving the co-clustering problem. The minimum sum-squared residue (MSSR) co-clustering~\cite{cho-sdm04} is a well-known co-clustering method. The MSSR co-clustering method has been initially designed for analyzing gene expression data, but later it has been studied in a more general co-clustering framework~\cite{banerjee-jmlr07}. Information-theoretic co-clustering~\cite{dhillon-kdd03} also has been known to be one of the most successful co-clustering methods where the co-clustering problem is formed as an optimization problem of maximizing the mutual information between the clustered random variables. However, these co-clustering models have constraints on the structure of the co-clusters such that the co-clusters are pairwise disjoint and all the data points for each dimension should belong to some cluster.

We note that~\cite{bcc},~\cite{drcc} and~\cite{baier-cko97} study the overlapping co-clustering problem but do not consider outlier detection. On the other hand, ~\cite{trcc} proposes a robust co-clustering algorithm by assuming the presence of outliers in a dataset, but does not consider overlapping co-clustering. While these methods only consider either overlapping co-clustering or outlier detection, our NEO-CC algorithm simultaneously detects overlapping co-clusters as well as outliers. 

We note that the ROCC algorithm~\cite{deodhar-icml09} and an infinite plaid model (IPM)~\cite{infp} have been recently proposed to find non-exhaustive, overlapping co-clusters. However, the ROCC algorithm includes complicated heuristics, and the infinite bi-clustering method requires a user to provide many non-intuitive hyperparameters with the model. On the other hand, the NEO-CC algorithm includes simple and intuitive parameters which can be easily estimated. We compare the performance of our NEO-CC method with the ROCC and the IPM methods in Section~\ref{sec:exp} observing that the NEO-CC method significantly outperforms the ROCC and the IPM methods.

\section{The NEO-CC Objectives}
We formally define the non-exhaustive, overlapping co-clustering (NEO-CC) problem, and discuss how we can design reasonable objective functions for the NEO-CC problem. 

\subsection{Problem Statement}
Let us consider a two-dimensional data matrix $\mX \in \mathbb{R}^{n \times m}$. For each dimension of $\mX$, let $\mathcal{X}^r$ denote the set of data points for row clustering, and $\mathcal{X}^c$ denote the set of data points for column clustering. The co-clustering problem is to cluster $\mathcal{X}^r = \{{\bf x}_1,{\bf x}_2,\dots,{\bf x}_n\}$ (${\bf x}_i \in \mathbb{R}^m$ for $i=1,\dots,n$) into $k$ row clusters $\{\mathcal{C}^r_1,\mathcal{C}^r_2,\dots,\mathcal{C}^r_k\}$, and cluster $\mathcal{X}^c = \{{\bf x}_1,{\bf x}_2,\dots,{\bf x}_m\}$ (${\bf x}_j \in \mathbb{R}^n$ for $j=1,\dots,m$) into $l$ column clusters $\{\mathcal{C}^c_1,\mathcal{C}^c_2,\dots,\mathcal{C}^c_l\}$. 

Since the solution of the NEO-CC problem is allowed to contain overlaps among the clusters and exclude outliers from the clusters, we note that there might exist $i$ and $j$ ($i,j \in \{1,\dots,k\}$) such that $\mathcal{C}^r_i \cap \mathcal{C}^r_j \neq \emptyset$ for $i \neq j$, and $ \bigcup\limits_{i=1}^{k} \mathcal{C}_i^r \subseteq \mathcal{X}^r$. We can also show the similar derivations for the $l$ column clusters.

Given an element $x_{ij}$ in $\mX$, let $f({\bf x}_i)$ denote the set of row clusters that ${\bf x}_i$ belongs to ($i=1,\dots,n$), and let $g({\bf x}_j)$ denote the set of column clusters that ${\bf x}_j$ belongs to ($j=1,\dots,m$). We use the notations $f(\cdot)$ and $g(\cdot)$ for convenience even though we note that $f(\cdot)$ and $g(\cdot)$ are not functions of ${\bf x}_i$ or ${\bf x}_j$ because each data point is allowed to belong to more than one cluster.

\subsection{The NEO-CC-M Objective Function}

\begin{figure}[t]
\centering
\begin{minipage}[b]{0.7\linewidth}\centering
  \centering
  \begin{tabular}{cc}
    \subfloat[Data matrix $\mX$, row clustering $\mU$, and column clustering $\mV$]{\includegraphics[width=1\textwidth]{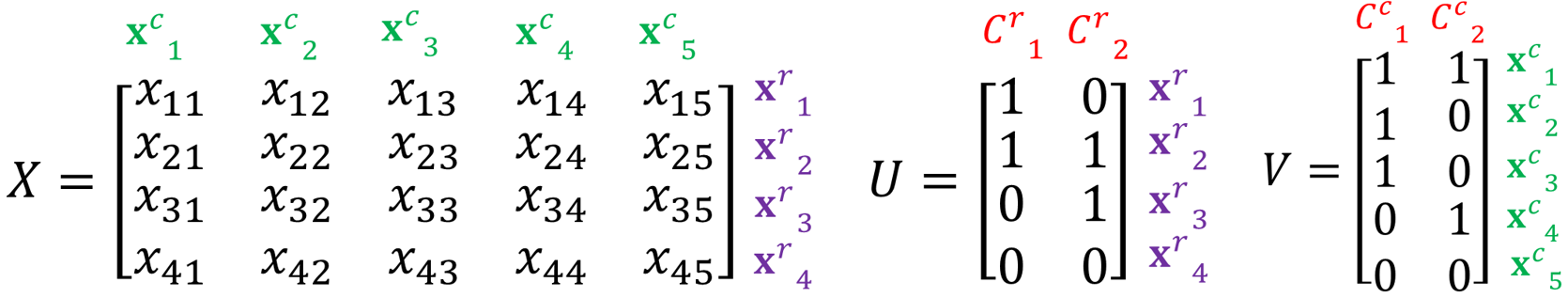}}\\
    \subfloat[The contribution of $x_{21}$ to the NEO-CC-M objective]{\includegraphics[width=0.9\textwidth]{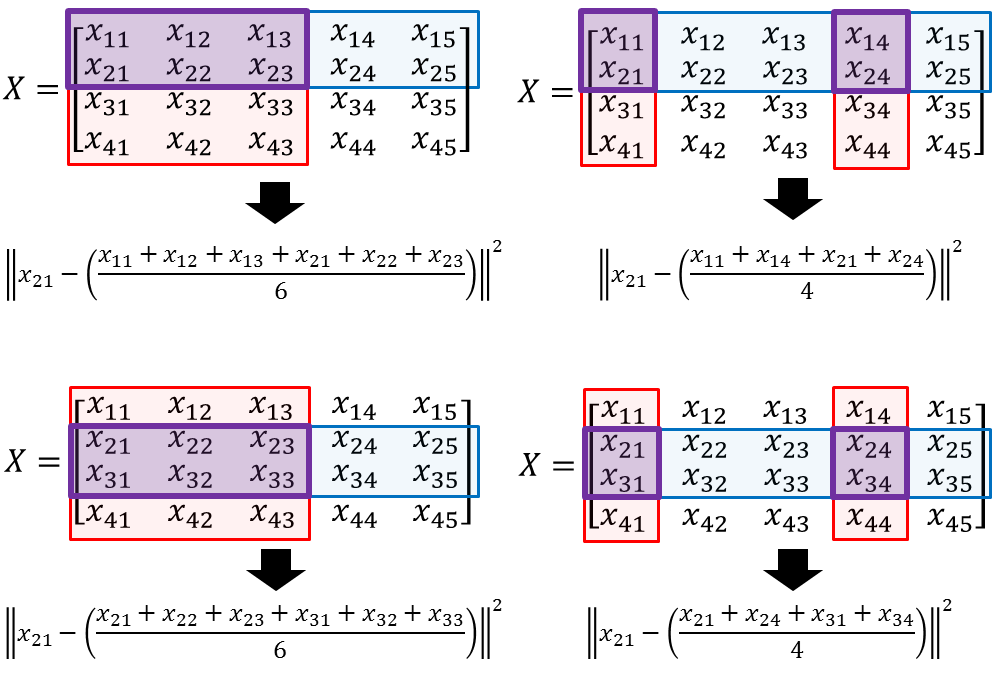}}
  \end{tabular}
\end{minipage}
\caption{Given a data matrix $\mX \in \mathbb{R}^{4 \times 5}$ and the assignment matrices $\mU$ and $\mV$, the contribution of an entry $x_{21}$ to the NEO-CC-M objective defined in~(\ref{neocc_obj}) is determined by $f({\bf x}^r_2)$ and $g({\bf x}^c_1)$. Note that ${\bf x}^r_2 \in \mathcal{C}^r_1$, ${\bf x}^r_2 \in \mathcal{C}^r_2$ (${\bf x}^r_2 \in \mathbb{R}^5$), ${\bf x}^c_1 \in \mathcal{C}^c_1$, ${\bf x}^c_1 \in \mathcal{C}^c_2$ (${\bf x}^c_1 \in \mathbb{R}^4$).}
\label{obj_fig}
\end{figure}

Inspired by the MSSR objective~\cite{cho-sdm04}, we first define the quality of the non-exhaustive, overlapping co-clusters by considering the sum of squared differences between each entry and each mean of the co-clusters the data point belongs to. This can be formally described as follows:
\begin{equation}
\label{neocc_entry}
\sum_{ \mathcal{C}_q \in g({\bf x}_j)} \sum_{ \mathcal{C}_p \in f({\bf x}_i)} \big( x_{ij} - \sum_{{\bf x}_t \in \mathcal{C}_q} \sum_{{\bf x}_s \in \mathcal{C}_p} \dfrac{x_{st}}{|\mathcal{C}_q||\mathcal{C}_p|} \big)^2
\end{equation}for $x_{ij}$ such that $f({\bf x}_i) \neq \emptyset$ and $g({\bf x}_j) \neq \emptyset$. Since each data point can belong to multiple row and column clusters, we need to consider all the combinations of these row and column clusters when we compute the mean of the co-clusters. 

Now, we represent the idea of (\ref{neocc_entry}) using vectors and matrices. Let $\mU = [u_{ij}]_{n \times k}$ denote the assignment matrix for row clustering, i.e., $u_{ij}=1$ if ${\bf x}_i$ belongs to cluster $j$; $u_{ij} =0$ otherwise. Similarly, let $\mV = [v_{ij}]_{m \times l}$ denote the assignment matrix for column clustering. Also, let $\hat{\mU} =$ $[\dfrac{{\bf u}_1}{\sqrt{n_1}}, \cdots, \dfrac{{\bf u}_k}{\sqrt{n_k}}]$ denote a normalized assignment matrix where ${\bf u}_c$ is the $c$-th column of $\mU$ and $n_c$ is the size of cluster $c$. Let ${\bf \hat{u}}_i$ denote the $i$-th column of $\hat{\mU}$. Similarly, we also define $\hat{\mV}$ and let ${\bf \hat{v}}_j$ denote the $j$-th column of $\hat{\mV}$. Let $\mathbbm{I}\{ exp \} = 1$ if $exp$ is true; 0 otherwise, and let $\mathbf{1}$ denote a vector having all the elements equal to one. Given a vector ${\bf y} \in \mathbb{R}^{m}$, let us define $D({\bf y})=[d_{ij}]_{m \times m}$ as the diagonal matrix with $d_{ii} = y_i$ ($i=1,\dots,m$). Then, our NEO-CC-M objective function is defined as follows:
\begin{equation}
\begin{array}{ll}
\underset{\mU, \mV} {\mbox{minimize}} & \sum\limits_{i=1}^k \sum\limits_{j=1}^l \| D({\bf u}_i) \mX D({\bf v}_j) - {\bf \hat{u}}_i {{\bf \hat{u}}_i}^T \mX {\bf \hat{v}}_j {{\bf \hat{v}}_j}^T \|_F^2 \\ 
\text{subject to} & \trace(\mU^T \mU) = (1+\alpha_r) n, \\ 
& \sum\nolimits_{i=1}^n \mathbbm{I}\{ (\mU\mathbf{1})_i = 0 \} \leq \beta_r n, \\ 
& \trace(\mV^T \mV) = (1+\alpha_c) m, \\
& \sum\nolimits_{i=1}^m \mathbbm{I}\{ (\mV\mathbf{1})_i = 0 \} \leq \beta_c m,
\end{array} 
\label{neocc_obj}
\end{equation}where $\alpha_r$ and $\beta_r$ are the parameters for row clustering, and $\alpha_c$ and $\beta_c$ are the parameters for column clustering. The parameters $\alpha_r$ and $\alpha_c$ control the amount of overlap among the clusters while $\beta_r$ and $\beta_c$ control the degree of non-exhaustiveness. We introduce these parameters motivated by the success of the one-way NEO-K-Means method~\cite{whang-sdm2015} where the similar parameters have been added to the traditional $k$-means objective.

The first two constraints in (\ref{neocc_obj}) are associated with the row clustering whereas the last two constraints are associated with the column clustering. The first constraint indicates that the total number of assignments in $\mU$ is equal to $(1+\alpha_r) n$. Thus, more than $n$ assignments are made in $\mU$ for $\alpha_r>0$, which implies that some data points belong to more than one cluster. The second constraint indicates the upper bound of the number of outliers, i.e., there can be at most $\beta_r n$ outliers. We can similarly interpret the last two constraints for the column clustering.

The NEO-CC-M objective seamlessly generalizes the NEO-K-Means~\cite{whang-sdm2015} and the MSSR objective~\cite{cho-sdm04}. If $\mV=\mI$, $\alpha_c=0$, $\beta_c=0$, then (\ref{neocc_obj}) is equivalent to the NEO-K-Means objective. If $\alpha_r=0$, $\alpha_c=0$, $\beta_r=0$, $\beta_c=0$, then (\ref{neocc_obj}) is equivalent to the MSSR objective.

Let us further explain the implication of the NEO-CC-M objective function. Suppose that we have a small data matrix $\mX \in \mathbb{R}^{4 \times 5}$ and the assignment matrices $\mU$ and $\mV$ as shown in Figure~\ref{obj_fig}(a). For an entry $x_{21}$, Figure~\ref{obj_fig}(b) shows the contribution of the entry $x_{21}$ to the NEO-CC-M objective in~(\ref{neocc_obj}) when ${\bf x}^r_2 \in \mathcal{C}^r_1$, ${\bf x}^r_2 \in \mathcal{C}^r_2$ (${\bf x}^r_2 \in \mathbb{R}^5$), ${\bf x}^c_1 \in \mathcal{C}^c_1$, ${\bf x}^c_1 \in \mathcal{C}^c_2$ (${\bf x}^c_1 \in \mathbb{R}^4$) where the superscripts indicate the row and column vectors and clusters. For the entry $x_{21}$, the NEO-CC-M objective takes account of four different means, each of which corresponds to a different combination of the row and column clusters.

\subsection{The NEO-CC-RCM Objective Function} 
Instead of considering the mean of each co-cluster, we can also consider the difference between each entry and the mean of the row clusters as well as the mean of the column clusters. Similar to how we encode the NEO-CC-M objective, we can formulate the element-wise difference as follows:

\begin{equation}
\label{neocc_entry2}
\sum_{ \mathcal{C}_q \in g({\bf x}_j)} \sum_{ \mathcal{C}_p \in f({\bf x}_i)} \big( x_{ij} - \sum_{{\bf x}_s \in \mathcal{C}_p} \dfrac{x_{sj}}{|\mathcal{C}_p|} - \sum_{{\bf x}_t \in \mathcal{C}_q} \dfrac{x_{it}}{|\mathcal{C}_q|} + \sum_{{\bf x}_t \in \mathcal{C}_q} \sum_{{\bf x}_s \in \mathcal{C}_p} \dfrac{x_{st}}{|\mathcal{C}_q||\mathcal{C}_p|} \big)^2
\end{equation}for $x_{ij}$ such that $f({\bf x}_i) \neq \emptyset$ and $g({\bf x}_j) \neq \emptyset$. Also, using matrices and vectors, we can represent (\ref{neocc_entry2}) as follows:
\begin{equation}
\begin{array}{ll}
\underset{\mU, \mV} {\mbox{minimize}} & \sum\limits_{i=1}^k \sum\limits_{j=1}^l \| \mH_{ij} \|_F^2, \text{where } \mH_{ij} \text{ is defined to be}\\
& D({\bf u}_i) \mX D({\bf v}_j) - {\bf \hat{u}}_i {{\bf \hat{u}}_i}^T \mX - \mX {\bf \hat{v}}_j {{\bf \hat{v}}_j}^T + {\bf \hat{u}}_i {{\bf \hat{u}}_i}^T \mX {\bf \hat{v}}_j {{\bf \hat{v}}_j}^T, \\
\text{subject to} & \trace(\mU^T \mU) = (1+\alpha_r) n, \\ 
& \sum\nolimits_{i=1}^n \mathbbm{I}\{ (\mU\mathbf{1})_i = 0 \} \leq \beta_r n, \\ 
& \trace(\mV^T \mV) = (1+\alpha_c) m, \\
& \sum\nolimits_{i=1}^m \mathbbm{I}\{ (\mV\mathbf{1})_i = 0 \} \leq \beta_c m.
\end{array} 
\label{neocc_obj2}
\end{equation}
If we let $\mH' =$ $\sum\limits_{i=1}^k \sum\limits_{j=1}^l \mH_{ij}$, we note that the row sum and the column sum of $\mH'$ is equal to zero (we add the co-cluster mean in~(\ref{neocc_entry2}) to retain this property). We name (\ref{neocc_obj2}) NEO-CC-RCM objective function. The idea of subtracting the row and the column means is similar to removing the user/item bias in recommender systems~\cite{koren}.

\subsection{Case Study}

We conduct a small case study to provide more insights about our NEO-CC objective functions. For simplicity, we focus on the NEO-CC-M objective function in this study, but our analysis can also be extended to the NEO-CC-RCM objective. 

Let us consider the following data matrix $\mX$:
\begin{equation}
\label{small_x}
\mX = \left[ \begin{array}{cccccc} 0.05 & 0.05 & 0.05 & 0 & 0 & 0 \\ 0.05 & 0.05 & 0.05 & 0 & 0 & 0 \\ 0.04 & 0.04 & 0.04 & 0 & 0.04 & 0.04 \\ 0.04 & 0.04 & 0 & 0.04 & 0.04 & 0.04 \\ 0 & 0 & 0 & 0.05 & 0.05 & 0.05 \\ 0 & 0 & 0 & 0.05 & 0.05 & 0.05 \\ 0 & 0 & 0.3 & 0 & 0 & 0 \end{array} \right]
\end{equation}
On $\mX$, let us consider four different co-clustering results and investigate how the NEO-CC-M objective function values are changed. Figure~\ref{obj_case} shows the NEO-CC-M objective values for the four different configurations. 

\begin{figure}[t]
\centering
\begin{minipage}[b]{1\linewidth}\centering
  \centering
  \begin{tabular}{cc}
    \subfloat[Objective value: 0.0720]{$\mU = \left[ \begin{array}{cc} 1 & 0 \\ 1 & 0 \\ 1 & 0 \\ 0 & 1 \\ 0 & 1 \\ 0 & 1 \\ 1 & 0 \end{array} \right] \mV = \left[ \begin{array}{cc} 1 & 0 \\ 1 & 0 \\ 1 & 0 \\ 0 & 1 \\ 0 & 1 \\ 0 & 1 \end{array} \right]$} 
    \subfloat[Objective value: 0.0677]{$\mU = \left[ \begin{array}{ccc} 1 & 0 & 0 \\ 1 & 0 & 0 \\ 0 & 1 & 0 \\ 0 & 1 & 0 \\ 0 & 0 & 1 \\ 0 & 0 & 1 \\ 1 & 0 & 0 \end{array} \right] \mV = \left[ \begin{array}{cc} 1 & 0 \\ 1 & 0 \\ 1 & 0 \\ 0 & 1 \\ 0 & 1 \\ 0 & 1 \end{array} \right]$} \\
    \subfloat[Objective value: 0.0137]{$\mU = \left[ \begin{array}{cc} 1 & 0 \\ 1 & 0 \\ 1 & 1 \\ 1 & 1 \\ 0 & 1 \\ 0 & 1 \\ 0 & 0 \end{array} \right] \mV = \left[ \begin{array}{cc} 1 & 0 \\ 1 & 0 \\ 1 & 0 \\ 0 & 1 \\ 0 & 1 \\ 0 & 1 \end{array} \right]$} 
    \subfloat[Objective value: 0.0102]{$\mU = \left[ \begin{array}{cc} 1 & 0 \\ 1 & 0 \\ 1 & 1 \\ 1 & 1 \\ 0 & 1 \\ 0 & 1 \\ 0 & 0 \end{array} \right] \mV = \left[ \begin{array}{cc} 1 & 0 \\ 1 & 0 \\ 0 & 0 \\ 0 & 1 \\ 0 & 1 \\ 0 & 1 \end{array} \right]$}
  \end{tabular}
\end{minipage}
\caption{NEO-CC-M objective values for four different co-clustering results}
\label{obj_case}
\end{figure}

In Figure~\ref{obj_case}(a)\&(b), we consider exhaustive, disjoint co-clusterings with different $k$. Notice that on these configurations, $nnz(\mU) + nnz(\mV)=13$ where $nnz(\mM)$ indicates the number of non-zero elements of matrix $\mM$. We observe that (b) yields a slightly small objective value than (a). 

Figure~\ref{obj_case}(c) implies that ${\bf x}^r_3$ and ${\bf x}^r_4$ belong to both of the row clusters, and ${\bf x}^r_7$ does not belong to any cluster. We see that (c) achieves a much smaller objective function value than (a) \& (b). Indeed, when we look at the data matrix $\mX$, we can see that (c) is a more desirable co-clustering result than (a) \& (b). 

Note that the NEO-CC-M objective value is proportional to the number of assignments in $\mU$ and $\mV$. In (c), we notice that $nnz(\mU) + nnz(\mV)=14$ ($[n\alpha_r]=1$, $[n\beta_r]=1$, $\alpha_c=0$, $\beta_c=0$). Thus, we also consider (d) where $nnz(\mU) + nnz(\mV)=13$. In this case, the column clustering becomes a disjoint and non-exhaustive clustering by setting $[n\alpha_r]=1$, $[n\beta_r]=1$, $[m\alpha_c]=-1$, $[m\beta_c]=1$. Notice that ${\bf x}^c_3$ is considered to be an outlier, so it does not belong to any cluster. We can see that (d) achieves the smallest objective value and also (d) might be the optimal co-clustering for $\mX$. 

On these examples, we observe that a smaller NEO-CC-M objective function value leads to a more desirable co-clustering result. This indicates that by optimizing our NEO-CC-M objective, we might be able to capture the natural co-clustering structures.

\setlength{\textfloatsep}{3pt}
\begin{algorithm}[htbp]
{\small
\caption{NEO-CC Algorithm}
\label{neocc_algo}
\begin{algorithmic}[1]
\renewcommand{\algorithmicrequire}{\textbf{Input:}}
\renewcommand{\algorithmicensure}{\textbf{Output:}}
\REQUIRE $\mX \in \mathbb{R}^{n \times m}$, $k$, $l$, $\alpha_r$, $\alpha_c$, $\beta_r$, $\beta_c$, $t_{max}$ 
\ENSURE Row clustering $\mU \in \{0,1\}^{n \times k} $, Column clustering $\mV \in \{0,1\}^{m \times l} $
\STATE Initialize $\mU$, $\mV$, and $t=0$.
\WHILE{not converged and $t<t_{max}$}
	\STATE /* Update Row Clustering */
	\FOR {\textbf{each} ${\bf x}_p \in \mathcal{X}^r$}
		\FOR {$q=1,\cdots,k$}
			\IF{NEO-CC-M objective is used}
			\STATE Compute $d^r_{pq}$ using (\ref{dpq}).
			\ELSE
			\STATE Compute $d^r_{pq}$ using (\ref{dpq2}).
			\ENDIF
		\ENDFOR
	\ENDFOR	
	\STATE Initialize $w=0$, $\mathcal{T}=\emptyset$, $\mathcal{S}=\emptyset$, and $\bar{\mathcal{C}}^r_i=\emptyset, \hat{\mathcal{C}}^r_i=\emptyset$ $\forall i$ $(i=1,\cdots,k)$.
	\WHILE{$w<(n+\alpha_r n)$}
		\IF{$w<(n-\beta_r n)$}
			\STATE Assign ${\bf x}^r_{i^*}$ to $\bar{\mathcal{C}}^r_{j^*}$ such that $(i^*,j^*)=$ \smash{$\underset{i,j}{\operatorname{argmin}}$} $d^r_{ij}$ where $\{(i,j)\} \notin \mathcal{T}, i \notin \mathcal{S}$. \label{argmin1}
			\STATE $\mathcal{S} = \mathcal{S} \cup \{i^*\}$.
		\ELSE
			\STATE Assign ${\bf x}^r_{i^*}$ to $\hat{\mathcal{C}}^r_{j^*}$ such that $(i^*,j^*)=$ \smash{$\underset{i,j}{\operatorname{argmin}}$} $d^r_{ij}$ where $\{(i,j)\} \notin \mathcal{T}$. \label{argmin2}
		\ENDIF
		\STATE $\mathcal{T} = \{(i^*,j^*)\} \cup \mathcal{T}$.
		\STATE $w=w+1$. 
	\ENDWHILE
	\STATE Update clusters $\mathcal{C}^r_i = \bar{\mathcal{C}}^r_{i} \cup \hat{\mathcal{C}}^r_{i}$ $\forall i$ $(i=1,\cdots,k)$. \label{update_row}
	\STATE /* Update Column Clustering */
	\FOR {\textbf{each} ${\bf x}_p \in \mathcal{X}^c$}
		\FOR {$q=1,\cdots,l$}
			\IF{NEO-CC-M objective is used}
			\STATE Compute $d^c_{pq}$ using (\ref{dpq3}).
			\ELSE
			\STATE Compute $d^c_{pq}$ using (\ref{dpq4}).
			\ENDIF
		\ENDFOR
	\ENDFOR	
	\STATE Initialize $w=0$, $\mathcal{T}=\emptyset$, $\mathcal{S}=\emptyset$, and $\bar{\mathcal{C}}^c_j=\emptyset, \hat{\mathcal{C}}^c_j=\emptyset$ $\forall j$ $(j=1,\cdots,l)$.
	\WHILE{$w<(m+\alpha_c m)$}
		\IF{$w<(m-\beta_c m)$}
			\STATE Assign ${\bf x}^c_{i^*}$ to $\bar{\mathcal{C}}^c_{j^*}$ such that $(i^*,j^*)=$ \smash{$\underset{i,j}{\operatorname{argmin}}$} $d^c_{ij}$ where $\{(i,j)\} \notin \mathcal{T}, i \notin \mathcal{S}$.
			\STATE $\mathcal{S} = \mathcal{S} \cup \{i^*\}$.
		\ELSE
			\STATE Assign ${\bf x}^c_{i^*}$ to $\hat{\mathcal{C}}^c_{j^*}$ such that $(i^*,j^*)=$ \smash{$\underset{i,j}{\operatorname{argmin}}$} $d^c_{ij}$ where $\{(i,j)\} \notin \mathcal{T}$.
		\ENDIF
		\STATE $\mathcal{T} = \{(i^*,j^*)\} \cup \mathcal{T}$.
		\STATE $w=w+1$. 
	\ENDWHILE
	\STATE Update clusters $\mathcal{C}^c_j = \bar{\mathcal{C}}^c_{j} \cup \hat{\mathcal{C}}^c_{j}$ $\forall j$ $(j=1,\cdots,l)$.
	\STATE $t=t+1$.
\ENDWHILE
\end{algorithmic}
}
\end{algorithm}

\section{The NEO-CC Algorithm}
\label{sec:algo}

To optimize our NEO-CC objective functions, we develop an efficient iterative algorithm which we call the NEO-CC algorithm described in Algorithm~\ref{neocc_algo}. 

The NEO-CC algorithm repeatedly updates $\mU$ (row clustering) and $\mV$ (column clustering) until the change in the objective becomes sufficiently small or the maximum number of iterations is reached. Within each iteration, $\mU$ and $\mV$ are alternatively updated. 

We can get reasonably good initializations of $\mU$ and $\mV$ by running the one-way iterative NEO-K-Means clustering algorithm on the data matrix and the transpose of the matrix, respectively. Similarly, we can also estimate the parameters $\alpha_r$, $\beta_r$, $\alpha_c$, and $\beta_c$ using the strategies suggested in~\cite{whang-sdm2015}. 

Algorithm~\ref{neocc_algo} consists of two main parts -- updating row clustering (lines 4--24) and updating column clustering (lines 26--46). Let us first describe how $\mU$ is updated. To update $\mU$, we need to compute distances between every data point in $\mathcal{X}^r$ and the clusters $\mathcal{C}^r_q$ for $q=1,\cdots,k$ (lines 4--12). Let $[d^r_{pq}]_{n \times k}$ denote these distances, and let $\mI_{p \cdot}$ denote the $p$-th row of the identity matrix of size $n$. If we use the NEO-CC-M objective, the distance between a data point ${\bf x}_p \in \mathcal{X}^r$ and a cluster $\mathcal{C}^r_q$ is computed by 
\begin{equation}
\label{dpq}
d^r_{pq} = \sum_{j=1}^{l} \norm{ (\mI_{p \cdot}) \mX D({\bf v}_j) - \dfrac{1}{\sqrt{\norm{{\bf u}_q}_1}}  {{\bf \hat{u}}_q}^T \mX {\bf \hat{v}}_j {{\bf \hat{v}}_j}^T }^2.
\end{equation} On the other hand, when we use the NEO-CC-RCM objective, the distance is computed by
\begin{equation}
\label{dpq2}
d^r_{pq} = \sum_{j=1}^{l} \norm{ (\mI_{p \cdot}) \mX \big( D({\bf v}_j) - {\bf \hat{v}}_j {{\bf \hat{v}}_j}^T \big) - \dfrac{1}{\sqrt{\norm{{\bf u}_q}_1}}  {{\bf \hat{u}}_q}^T \mX \big( D({\bf v}_j) - {\bf \hat{v}}_j {{\bf \hat{v}}_j}^T \big) }^2.
\end{equation} For each data point ${\bf x}_p \in \mathcal{X}^r$ ($p=1,\cdots,n$), we record its closest cluster and that distance. By sorting the data points in ascending order by the distance to its closest cluster, we assign the first $(1-\beta_r)n$ data points to their closest clusters to satisfy the second constraint in (\ref{neocc_obj}). Then, we make $\alpha_r n + \beta_r n$ assignments by taking $\alpha_r n + \beta_r n$ minimum distances among $[d^r_{pq}]_{n \times k}$. Note that, in total, we make $n + \alpha_r n$ assignments in $\mU$, which satisfies the first constraint in (\ref{neocc_obj}). In this way, the row clustering $\mU$ is updated. 

Similarly, we can update the column clustering $\mV$. Let $\mI_{\cdot p}$ denote the $p$-th column of the identity matrix of size $m$. The distance between a data point ${\bf x}_p \in \mathcal{X}^c$ and a column cluster $\mathcal{C}^c_q$ is computed by
\begin{equation}
\label{dpq3}
d^c_{pq} = \sum_{i=1}^{k} \norm{ D({\bf u}_i) \mX (\mI_{\cdot p}) - \dfrac{1}{\sqrt{\norm{{\bf v}_q}_1}} {\bf \hat{u}}_i {{\bf \hat{u}}_i}^T \mX {\bf \hat{v}}_q }^2
\end{equation}for the NEO-CC-M objective, and
\begin{equation}
\label{dpq4}
d^c_{pq} = \sum_{i=1}^{k} \norm{ \big( D({\bf u}_i) - {\bf \hat{u}}_i {{\bf \hat{u}}_i}^T \big) \mX (\mI_{\cdot p}) - \dfrac{1}{\sqrt{\norm{{\bf v}_q}_1}} \big( D({\bf u}_i) - {\bf \hat{u}}_i {{\bf \hat{u}}_i}^T \big) \mX {\bf \hat{v}}_q }^2 
\end{equation}for the NEO-CC-RCM objective. After computing the distances between every data point in $\mathcal{X}^c$ and the column clusters using (\ref{dpq3}) or (\ref{dpq4}), $\mV$ is updated in a similar way to how $\mU$ is updated.

We note that when we use the NEO-CC-M objective, if $\mV=\mI$, $\alpha_c=0$, $\beta_c=0$, then Algorithm~\ref{neocc_algo} is identical to the one-way NEO-K-Means clustering algorithm~\cite{whang-sdm2015}, and if $\alpha_r=0$, $\alpha_c=0$, $\beta_r=0$, $\beta_c=0$, then Algorithm~\ref{neocc_algo} is identical to the MSSR algorithm~\cite{cho-sdm04}. 

\subsection{Convergence Analysis}
\label{sec:conv}

We show that Algorithm~\ref{neocc_algo} monotonically decreases the NEO-CC-M and the NEO-CC-RCM objective functions, and thus converges in a finite number of iterations. We first state the following lemma which is need for the convergence proofs. 

\begin{mylem}
\label{lemma}
Let us consider the function $h({\bf z}) = \sum\limits_{i} \pi_i \norm{ {\bf a}_i - c{\bf z}\mM }^2_2  $ where ${\bf a}_i \in \mathbb{R}^{1 \times m}$, ${\bf z} \in \mathbb{R}^{1 \times m}$, $\pi_i > 0$ $\forall i$, $c= \dfrac{1}{\sqrt{\sum_i \pi_i}} $, and $\mM \in \mathbb{R}^{m \times m}$ such that $\mM\mM^T=\mM$. Let ${\bf z}^*$ denote the minimizer of $h({\bf z})$. Then, ${\bf z}^*$ is given by $ \Big( \sqrt{\sum_i \pi_i} \Big) \mM {{\bf z}^*}^T = \mM \Big( \sum\limits_i \pi_i {\bf a}_i^T \Big). $
\end{mylem}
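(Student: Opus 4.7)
The plan is to identify $\mathbf{z}^*$ by a direct first-order analysis of $h$, which is a convex quadratic in $\mathbf{z}$. The crucial preliminary observation I would record is that the hypothesis $\mathbf{M}\mathbf{M}^T=\mathbf{M}$ forces $\mathbf{M}$ to be symmetric: the left-hand side is symmetric for any matrix, so $\mathbf{M}^T=\mathbf{M}$. Combined with the identity itself, this gives $\mathbf{M}^2=\mathbf{M}\mathbf{M}^T=\mathbf{M}$, so $\mathbf{M}$ is an orthogonal projection. Self-adjointness and idempotency are precisely what will make the quadratic form in $\mathbf{z}$ tractable.

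First I would expand the squared norm using $\|\mathbf{y}\|_2^2=\mathbf{y}\mathbf{y}^T$ for a row vector $\mathbf{y}$, obtaining
\[ \|\mathbf{a}_i - c\mathbf{z}\mathbf{M}\|_2^2 = \mathbf{a}_i\mathbf{a}_i^T - 2c\,\mathbf{z}\mathbf{M}\mathbf{a}_i^T + c^2\,\mathbf{z}\mathbf{M}\mathbf{z}^T, \]
where the two cross terms are merged via $\mathbf{M}^T=\mathbf{M}$ (so that the scalars $\mathbf{a}_i\mathbf{M}^T\mathbf{z}^T$ and $\mathbf{z}\mathbf{M}\mathbf{a}_i^T$ coincide) and the pure quadratic part is simplified via $\mathbf{M}\mathbf{M}^T=\mathbf{M}$. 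Summing against the weights $\pi_i$ and using $c^2\sum_i\pi_i=1$ from the definition of $c$, I get
\[ h(\mathbf{z}) = \sum_i \pi_i\mathbf{a}_i\mathbf{a}_i^T - 2c\,\mathbf{z}\mathbf{M}\Big(\sum_i \pi_i \mathbf{a}_i^T\Big) + \mathbf{z}\mathbf{M}\mathbf{z}^T. \]

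Next I differentiate with respect to $\mathbf{z}$ (as a $1\times m$ row vector), noting that the gradient of $\mathbf{z}\mathbf{M}\mathbf{z}^T$ is $2\mathbf{z}\mathbf{M}$ by symmetry. Setting the gradient to zero at $\mathbf{z}^*$ produces $\mathbf{z}^*\mathbf{M} = c\big(\sum_i \pi_i \mathbf{a}_i\big)\mathbf{M}$. Transposing both sides and once more applying $\mathbf{M}^T=\mathbf{M}$ yields $\mathbf{M}(\mathbf{z}^*)^T = c\,\mathbf{M}\big(\sum_i \pi_i \mathbf{a}_i^T\big)$, and multiplying by $1/c=\sqrt{\sum_i\pi_i}$ gives the stated normal equation. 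Since $h$ is convex in $\mathbf{z}$ (the Hessian reduces to $2\mathbf{M}$, which is positive semidefinite as $\mathbf{M}$ is a projection), this stationarity condition is sufficient for minimality.

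The main obstacle is essentially bookkeeping between row- and column-vector conventions, together with a conceptual subtlety that the lemma itself sidesteps: $\mathbf{z}^*$ is not unique in general. Because $h$ depends on $\mathbf{z}$ only through $\mathbf{z}\mathbf{M}$, any shift of $\mathbf{z}^*$ by a vector in the nullspace of $\mathbf{M}$ leaves $h$ unchanged; this is exactly why the lemma characterizes the minimizer through the projected quantity $\mathbf{M}(\mathbf{z}^*)^T$ rather than attempting to solve for $\mathbf{z}^*$ itself.
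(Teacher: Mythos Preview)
Your proof is correct and follows essentially the same route as the paper: expand the squared norm, differentiate, set the gradient to zero, and invoke $\mM\mM^T=\mM$ to simplify. You add some extra rigor the paper omits (the observation that $\mM$ must be a symmetric projection, the convexity argument for sufficiency, and the remark on non-uniqueness of $\mathbf{z}^*$), but the core argument is identical.
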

\begin{proof}
We can express $h({\bf z})$ as follows:
$$h({\bf z}) = \sum\limits_{i} \pi_i \big( {\bf a}_i{\bf a}_i^T -2c{\bf z}\mM{\bf a}_i^T +c^2{\bf z}\mM\mM^T{\bf z}^T \big),$$
and the gradient is given by
$$ \dfrac{\partial h({\bf z})}{\partial {\bf z}} = \sum\limits_{i} \pi_i \big( -2c\mM{\bf a}_i^T + 2c^2\mM\mM^T{\bf z}^T \big).$$
By setting the gradient to zero, we get
\begin{align*}
\sum\limits_i \pi_i \mM {\bf a}_i^T &= c \Big( \sum\limits_i \pi_i \Big) \mM \mM^T {{\bf z}^*}^T \\
&= c \Big( \sum\limits_i \pi_i \Big) \mM {{\bf z}^*}^T \hspace{0.3cm} \text{since $\mM\mM^T=\mM$}
\end{align*}
By setting $c= \dfrac{1}{\sqrt{\sum_i \pi_i}}$, we get
$$ \Big( \sqrt{\sum_i \pi_i} \Big) \mM {{\bf z}^*}^T = \mM \Big( \sum\limits_i \pi_i {\bf a}_i^T \Big). $$ 
\end{proof}

Theorem~\ref{thm:conv} shows the convergence of Algorithm~\ref{neocc_algo} with the NEO-CC-M objective function.

\begin{mythm}
\label{thm:conv}
Algorithm~\ref{neocc_algo} monotonically decreases the NEO-CC-M objective function defined in~(\ref{neocc_obj}).
\end{mythm}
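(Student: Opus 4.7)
The plan is to show that each iteration of Algorithm~\ref{neocc_algo} is composed of two alternating half-updates---first $\mU$ (lines 4--24), then $\mV$ (lines 26--46)---and that neither half ever increases the NEO-CC-M objective~(\ref{neocc_obj}). Chaining then yields monotonic decrease, and since the feasible set of binary pairs $(\mU,\mV)$ is finite, convergence in finitely many iterations follows. By symmetry I focus on the row update with $\mV$ held fixed. The key preparatory identity is an entry-wise expansion of the $(i,j)$ Frobenius term in~(\ref{neocc_obj}):
\[
\| D({\bf u}_i)\mX D({\bf v}_j) - {\bf \hat{u}}_i {\bf \hat{u}}_i^T \mX {\bf \hat{v}}_j {\bf \hat{v}}_j^T \|_F^2 \;=\; \sum_{p\in\mathcal{C}^r_i,\,q\in\mathcal{C}^c_j}(x_{pq}-\mu_{ij})^2,
\]
where $\mu_{ij}=\mu_{ij}(\mU,\mV)$ is the $(i,j)$ co-cluster mean. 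Summing and swapping orders, the objective reads $F(\mU,\mV)=\sum_{i,p}u_{ip}\,d^r_{pi}(\mU,\mV)$ with $d^r_{pi}(\mU,\mV)$ exactly the quantity~(\ref{dpq}) evaluated at the current clustering, so the per-point distances computed by the algorithm are precisely the contributions of each (point, cluster) assignment to $F$.

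Let $\mU^{(t)}$ and $\mU^{(t+1)}$ be the assignments before and after the row update, and write $d^r_{pi}=d^r_{pi}(\mU^{(t)},\mV)$ as in the algorithm. I would chain two inequalities. First, $\sum u^{(t+1)}_{ip}\,d^r_{pi}\leq \sum u^{(t)}_{ip}\,d^r_{pi}$: the greedy two-phase assignment (lines~13--22) minimizes $\sum u_{ip}\,d^r_{pi}$ over all $\mU$ satisfying $\trace(\mU^T\mU)=(1+\alpha_r)n$ and $\sum_i\mathbbm{I}\{(\mU\mathbf{1})_i=0\}\leq \beta_r n$. The constraint set admits a clean decomposition: any feasible $\mU$ must ``cover'' at least $(1-\beta_r)n$ distinct rows (each most cheaply assigned to its best cluster) and can then fill the remaining $(\alpha_r+\beta_r)n$ slots with any unused (row, cluster) pairs; a standard exchange argument on sorted $d^r_{pi}$ shows greedy handles both sub-problems optimally, and since $\mU^{(t)}$ itself is feasible the inequality follows. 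Second, $F(\mU^{(t+1)},\mV)\leq \sum u^{(t+1)}_{ip}\,d^r_{pi}$: the right-hand side uses the stale means $\mu_{ij}(\mU^{(t)},\mV)$ implicit in the $d^r_{pi}$, whereas $F$ uses the recomputed means $\mu_{ij}(\mU^{(t+1)},\mV)$. Applying Lemma~\ref{lemma} with $\pi_p=u^{(t+1)}_{ip}$, ${\bf a}_p$ equal to the $p$-th row of $\mX$ restricted to $\mathcal{C}^c_j$, and $\mM={\bf \hat{v}}_j{\bf \hat{v}}_j^T$ (which does satisfy $\mM\mM^T=\mM$) identifies the fresh co-cluster mean as the minimizer of the corresponding $h(\cdot)$ for each $(i,j)$, so substituting fresh means for stale ones can only decrease the value. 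Chaining yields $F(\mU^{(t+1)},\mV)\leq F(\mU^{(t)},\mV)$.

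The column update is analyzed symmetrically---interchange rows and columns throughout, using $d^c_{qj}$ from~(\ref{dpq3}) in place of $d^r_{pi}$---and combining with the row-update bound shows that a full iteration never increases $F$. The main technical obstacle is justifying greedy optimality in the first inequality above: the two-constraint assignment problem does not obviously form a matroid, and the exchange argument has to carefully track feasibility under swaps, in particular ensuring that removing any ``covering'' phase-one pair is compensated by another pair that preserves the support lower bound $(1-\beta_r)n$. The Lemma~\ref{lemma} step is more mechanical, mainly a matter of verifying that the projector ${\bf \hat{v}}_j{\bf \hat{v}}_j^T$ satisfies the hypothesis $\mM\mM^T=\mM$ and matches the lemma's structure.
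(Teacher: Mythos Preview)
Your proposal is correct and follows essentially the same two-step chain as the paper's proof: first show that the greedy reassignment (lines~\ref{argmin1} and~\ref{argmin2}) does not increase $\sum_{i,p} u_{ip}\,d^r_{pi}$ with the distances frozen at the old means, then invoke Lemma~\ref{lemma} with $\mM={\bf \hat{v}}_j{\bf \hat{v}}_j^T$ to show that refreshing the co-cluster means only decreases the value further, and finish by the symmetric column argument. The one place you are more careful than the paper is in flagging that the first inequality rests on the greedy two-phase rule actually minimizing $\sum u_{ip}d^r_{pi}$ over all feasible $\mU$; the paper simply asserts this (``because we make assignments by line~\ref{argmin1} \& line~\ref{argmin2}''), whereas you correctly note that an exchange argument is needed and sketch its shape.
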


\begin{proof}
Let $J^{(t)}$ denote the NEO-CC-M objective (\ref{neocc_obj}) at $t$-th iteration. Let $\mU$ denote the assignment matrix of the current row clustering $\mathcal{C}$, and $\mU^*$ denote the assignment matrix of the updated row clustering $\mathcal{C}^*$ obtained by line \ref{update_row} in Algorithm~\ref{neocc_algo}.
\begin{align*}
J^{(t)} &= \sum\limits_{i=1}^k \sum\limits_{j=1}^l \norm{ D({\bf u}_i) \mX D({\bf v}_j) - {\bf \hat{u}}_i {{\bf \hat{u}}_i}^T \mX {\bf \hat{v}}_j {{\bf \hat{v}}_j}^T }_F^2 \\
&= \sum\limits_{i=1}^k \sum\limits_{j=1}^l \sum\limits_{ {\bf x}_p\in \mathcal{C}_i} \norm{ (\mI_{p \cdot}) \mX D({\bf v}_j) - \dfrac{1}{\sqrt{\norm{{\bf u}_i}_1}}  {{\bf \hat{u}}_i}^T \mX {\bf \hat{v}}_j {{\bf \hat{v}}_j}^T }_2^2 \\
&\geq \sum\limits_{i=1}^k \sum\limits_{j=1}^l \sum\limits_{ {\bf x}_p\in \mathcal{C}^*_i} \norm{ (\mI_{p \cdot}) \mX D({\bf v}_j) - \dfrac{1}{\sqrt{\norm{{\bf u}_i}_1}}  {{\bf \hat{u}}_i}^T \mX {\bf \hat{v}}_j {{\bf \hat{v}}_j}^T }_2^2 \hspace{0.3cm} \\ 
&\geq \sum\limits_{i=1}^k \sum\limits_{j=1}^l \sum\limits_{ {\bf x}_p\in \mathcal{C}^*_i} \norm{ (\mI_{p \cdot}) \mX D({\bf v}_j) - \dfrac{1}{\sqrt{\norm{{\bf u^*}_i}_1}}  {{\bf \hat{u^*}}_i}^T \mX {\bf \hat{v}}_j {{\bf \hat{v}}_j}^T }_2^2 \hspace{0.3cm} \\
&= \sum\limits_{i=1}^k \sum\limits_{j=1}^l \norm{ D({\bf u^*}_i) \mX D({\bf v}_j) - {\bf \hat{u^*}}_i {{\bf \hat{u^*}}_i}^T \mX {\bf \hat{v}}_j {{\bf \hat{v}}_j}^T }_F^2 \\
&\geq \sum\limits_{i=1}^k \sum\limits_{j=1}^l \norm{ D({\bf u^*}_i) \mX D({\bf v^*}_j) - {\bf \hat{u^*}}_i {{\bf \hat{u^*}}_i}^T \mX {\bf \hat{v^*}}_j {{\bf \hat{v^*}}_j}^T }_F^2 \hspace{0.3cm} \\
&= J^{(t+1)}
\end{align*}
The first inequality holds because we make assignments by line \ref{argmin1} \& line \ref{argmin2}, and the second inequality holds by Lemma~\ref{lemma} with ${\bf a}_i = (\mI_{p \cdot}) \mX D({\bf v}_j)$, $\mM = {\bf \hat{v}}_j {{\bf \hat{v}}_j}^T$, ${\bf z}^* = {{\bf \hat{u^*}}_i}^T \mX$, and $\sqrt{\sum_i \pi_i} = \sqrt{\norm{{\bf u^*}_i}_1}$. The last inequality indicates that we can similarly show the decrease from $\mV$ to $\mV^*$.
\end{proof}

Similarly, we can also show the convergence of Algorithm~\ref{neocc_algo} with the NEO-CC-RCM objective function.

\begin{mythm}
\label{thm:conv2}
The NEO-CC Algorithm monotonically decreases the NEO-CC-RCM objective function defined in (\ref{neocc_obj2}).
\end{mythm}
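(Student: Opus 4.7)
The plan is to mimic the argument used for Theorem~\ref{thm:conv}, substituting the projector ${\bf \hat{v}}_j{\bf \hat{v}}_j^T$ in that proof with the ``deflation'' operator $\mM_j := D({\bf v}_j) - {\bf \hat{v}}_j{\bf \hat{v}}_j^T$ (and analogously $\mN_i := D({\bf u}_i) - {\bf \hat{u}}_i{\bf \hat{u}}_i^T$ for the column sweep). The first step is to rewrite $\mH_{ij}$ in the factored form $\mH_{ij} = \mN_i\,\mX\,\mM_j$, which lets me decompose the Frobenius norm row-wise as
\begin{equation*}
\|\mH_{ij}\|_F^2 \;=\; \sum_{{\bf x}_p\in\mathcal{C}_i}\norm{(\mI_{p\cdot})\mX\,\mM_j \;-\; \tfrac{1}{\sqrt{\norm{{\bf u}_i}_1}}\,{\bf \hat{u}}_i^T\mX\,\mM_j}_2^2,
\end{equation*}
matching precisely the distance $d^r_{pi}$ from~(\ref{dpq2}). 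This decomposition is the cornerstone that lets me transfer the argument of Theorem~\ref{thm:conv} verbatim.

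Next, I would chain three inequalities exactly as in the previous proof. First, because line~\ref{argmin1} and line~\ref{argmin2} greedily assign each ${\bf x}^r_p$ to minimize $d^r_{pq}$ over admissible $(p,q)$ pairs subject to the cardinality constraints, the objective evaluated at the new row clustering $\mathcal{C}^*$ with the old centroids ${\bf \hat{u}}_i$ is no larger than $J^{(t)}$. Second, invoking Lemma~\ref{lemma} with ${\bf a}_p = (\mI_{p\cdot})\mX\,\mM_j$, $\pi_p = 1$ for $p\in\mathcal{C}^*_i$, $\mM = \mM_j$, and ${\bf z} = {\bf \hat{u}}_i^T\mX$ (so $c = 1/\sqrt{\norm{{\bf u^*}_i}_1}$) shows that replacing ${\bf \hat{u}}_i$ by the recomputed ${\bf \hat{u^*}}_i$ reduces the quadratic further. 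Third, the analogous argument on the column side with $\mN_i$ in place of $\mM_j$ yields $J^{(t+1)} \le J^{(t)}$.

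The main obstacle is verifying the hypothesis $\mM\mM^T = \mM$ of Lemma~\ref{lemma} for the deflation matrix $\mM_j = D({\bf v}_j) - {\bf \hat{v}}_j{\bf \hat{v}}_j^T$ (and symmetrically for $\mN_i$). Since ${\bf v}_j \in \{0,1\}^m$, one has $D({\bf v}_j)^2 = D({\bf v}_j)$; since ${\bf \hat{v}}_j = {\bf v}_j/\sqrt{n_j}$ is supported exactly on the nonzero indices of ${\bf v}_j$, one has $D({\bf v}_j){\bf \hat{v}}_j = {\bf \hat{v}}_j$; and ${\bf \hat{v}}_j^T{\bf \hat{v}}_j = 1$ by normalization. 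Expanding gives
\begin{equation*}
\mM_j \mM_j^T \;=\; D({\bf v}_j) - {\bf \hat{v}}_j{\bf \hat{v}}_j^T - {\bf \hat{v}}_j{\bf \hat{v}}_j^T + {\bf \hat{v}}_j{\bf \hat{v}}_j^T \;=\; \mM_j,
\end{equation*}
so $\mM_j$ is a valid oblique projector and Lemma~\ref{lemma} applies. The rest is bookkeeping: since each of the four constraints in~(\ref{neocc_obj2}) restricts the update to a finite discrete set and $J^{(t)}$ is bounded below by zero, the monotone decrease implies convergence in finitely many iterations.
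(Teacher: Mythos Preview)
Your argument is essentially the paper's own proof: the same row-wise decomposition of $\|\mH_{ij}\|_F^2$ into the distances $d^r_{pq}$ of~(\ref{dpq2}), the same assignment-step inequality from lines~\ref{argmin1}--\ref{argmin2}, and the same invocation of Lemma~\ref{lemma} with $\mM = D({\bf v}_j)-{\bf \hat v}_j{\bf \hat v}_j^T$, ${\bf a}_p=(\mI_{p\cdot})\mX\mM$, and ${\bf z}^*={\bf \hat u^*}_i^T\mX$, followed by the symmetric column argument. Your explicit check that $\mM_j\mM_j^T=\mM_j$ (which the paper takes for granted) and the remark on finite termination are welcome additions but do not change the route.
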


\begin{proof}
Let $J^{(t)}$ denote the NEO-CC-RCM objective at $t$-th iteration.
\begin{align*}
J^{(t)} &= \sum\limits_{i=1}^k \sum\limits_{j=1}^l \| \mH_{ij} \|_F^2, \text{ where } \\
& \mH_{ij} = D({\bf u}_i) \mX D({\bf v}_j) - {\bf \hat{u}}_i {{\bf \hat{u}}_i}^T \mX - \mX {\bf \hat{v}}_j {{\bf \hat{v}}_j}^T + {\bf \hat{u}}_i {{\bf \hat{u}}_i}^T \mX {\bf \hat{v}}_j {{\bf \hat{v}}_j}^T\\
&= \sum\limits_{i=1}^k \sum\limits_{j=1}^l \sum\limits_{ {\bf x}_p\in \mathcal{C}_i} \norm{ \mH^p_{ij} }_2^2, \text{ where } \\
& \mH^p_{ij} = (\mI_{p \cdot}) \mX \big( D({\bf v}_j) - {\bf \hat{v}}_j {{\bf \hat{v}}_j}^T \big) - \dfrac{1}{\sqrt{\norm{{\bf u}_i}_1}}  {{\bf \hat{u}}_i}^T \mX \big( D({\bf v}_j) - {\bf \hat{v}}_j {{\bf \hat{v}}_j}^T \big)\\
&\geq \sum\limits_{i=1}^k \sum\limits_{j=1}^l \sum\limits_{ {\bf x}_p\in \mathcal{C}^*_i} \norm{ \mH^p_{ij} }_2^2 
\geq \sum\limits_{i=1}^k \sum\limits_{j=1}^l \sum\limits_{ {\bf x}_p\in \mathcal{C}^*_i} \norm{ \mH^p_{i^*j} }_2^2 
= \sum\limits_{i=1}^k \sum\limits_{j=1}^l \norm{ \mH_{i^*j} }_F^2 \\
&\geq \sum\limits_{i=1}^k \sum\limits_{j=1}^l \norm{ \mH_{i^*j^*} }_F^2 
= J^{(t+1)}
\end{align*}
The first inequality holds because we make assignments by line 16 \& line 19, and the second inequality holds by Lemma 1 with $\mM = D({\bf v}_j) - {\bf \hat{v}}_j {{\bf \hat{v}}_j}^T$, $\sqrt{\sum_i \pi_i} = \sqrt{\norm{{\bf u^*}_i}_1}$, ${\bf a}_i = (\mI_{p \cdot}) \mX \mM$, and ${\bf z}^* = {{\bf \hat{u^*}}_i}^T \mX$. The last inequality indicates the decrease of the objective by updating column clustering from $\mV$ to $\mV^*$.
\end{proof}

\begin{figure}[t]
\centering
\begin{minipage}[b]{0.8\linewidth}\centering
  \centering
  \begin{tabular}{cc}
    \subfloat[The NEO-CC-M objective vs. Iterations]{\includegraphics[width=0.5\textwidth]{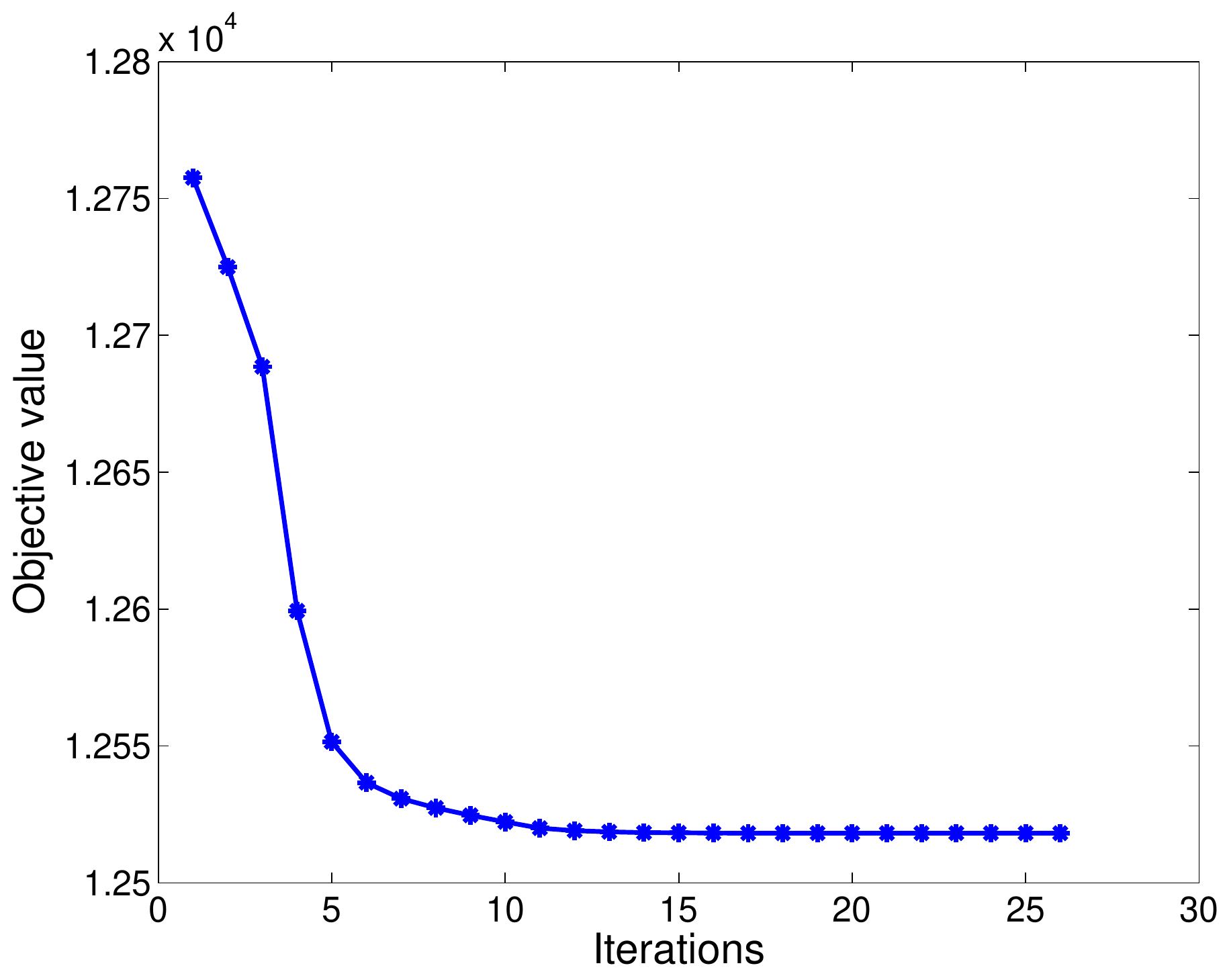}}&
    \subfloat[The NEO-CC-RCM objective vs. Iterations]{\includegraphics[width=0.5\textwidth]{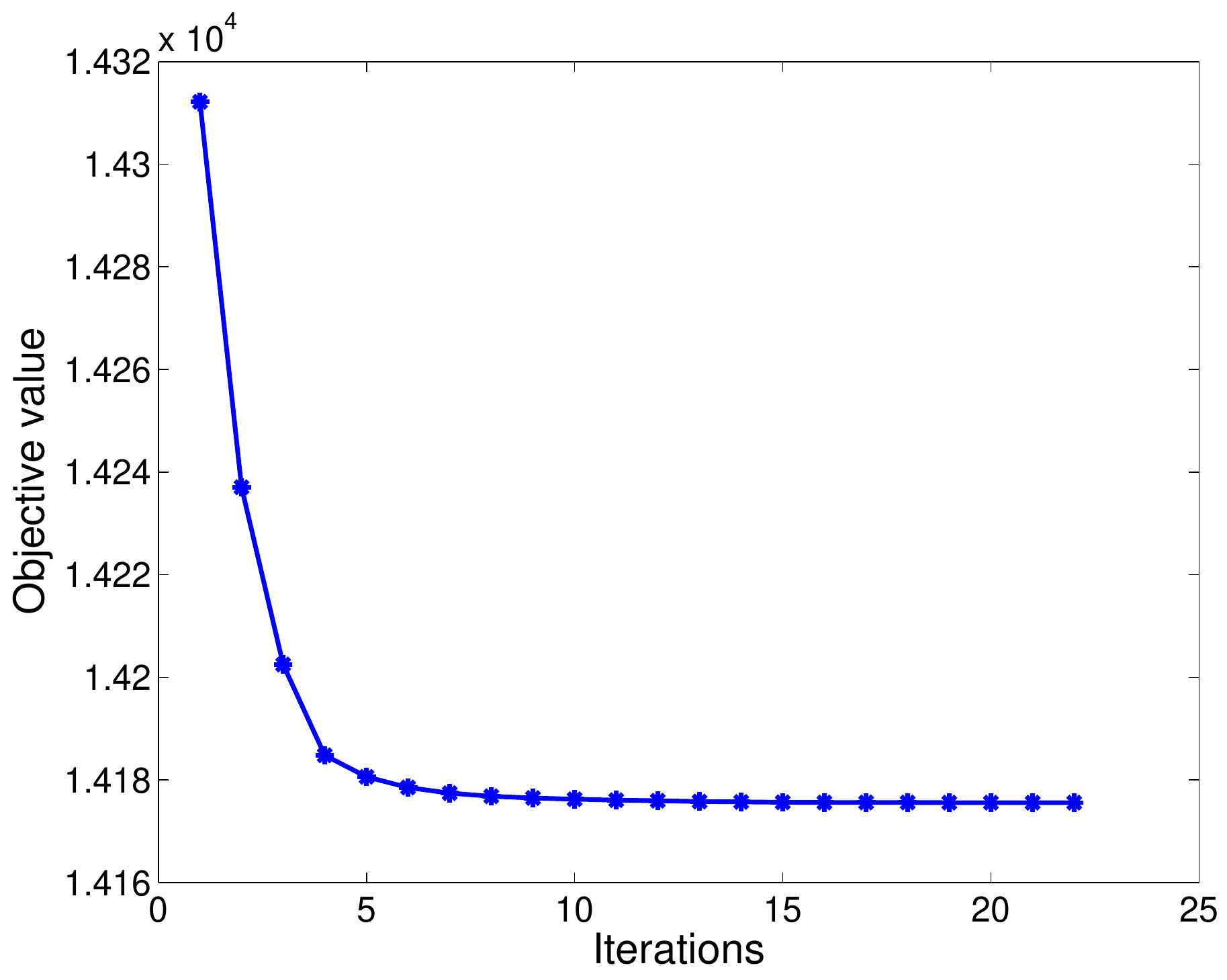}}
  \end{tabular}
\end{minipage}
\caption{The NEO-CC algorithm monotonically decreases the NEO-CC objective values.}
\label{fig:obj}
\end{figure}

We now experimentally show the convergence of the NEO-CC algorithm. In Figure~\ref{fig:obj}(a)\&(b), we present the change of the NEO-CC objective function values on the yeast gene expression dataset (Details about this dataset are described in Section~\ref{sec:exp}). The $x$-axis represents the number of iterations, and the $y$-axis represents the objective function value. We see that the the NEO-CC objective function values are monotonically decreased. 

\section{Experimental Results}
\label{sec:exp}
We compare the performance of our NEO-CC algorithm with other state-of-the-art co-clustering and one-way clustering algorithms (which are listed below) on real-world datasets. 

\begin{itemize}
\item The ROCC algorithm~\cite{deodhar-icml09} (ROCC): ROCC can detect non-exhaustive, overlapping co-clusters in noisy datasets. We used the software provided by the authors of~\cite{deodhar-icml09}.  
\item The infinite plaid model~\cite{infp} (IPM): IPM is also able to discover non-exhaustive, overlapping co-clusters. We used the Matlab implementation of the method\footnote{We downloaded the code from \url{https://github.com/k-ishiguro/InfinitePlaidModels}}.
\item The minimum sum-squared residue co-clustering~\cite{cho-sdm04} (MSSR): MSSR is designed to find exhaustive and disjoint co-clusters. In~\cite{cho-sdm04}, two objective functions are proposed, each of which is related to our NEO-CC-M and NEO-CC-RCM objective. In our experiments, the two MSSR objective functions are denoted by MSSR1 and MSSR2.
\item The iterative one-way NEO-K-Means algorithm~\cite{whang-sdm2015} (NEO-iter): NEO-iter is an iterative one-way clustering algorithm that produces non-exhaustive, overlapping clusters. 
\item The LRSDP-based one-way NEO-K-Means algorithm~\cite{hou-whang-kdd2015} (NEO-lrsdp): NEO-lrsdp is a low-rank SDP(semidefinite programming)-based one-way clustering algorithm which is an improved version of the iterative NEO-K-Means.
\end{itemize}

We run the NEO-CC algorithm with our two different objectives, NEO-CC-M and NEO-CC-RCM. 

\subsection{User-Movie Ratings Datasets}

\begin{table*}[htbp]
\caption{$F_1$ scores (\%) on three MovieLens datasets. NEO-CC-M and NEO-CC-RCM methods achieve higher $F_1$ scores than other methods.} 
\label{tb:f1}
\centering
{\small
\begin{tabularx}{\linewidth}{llXXXXXXXX}
\toprule
	   & & IPM & ROCC & MSSR1 & MSSR2 & NEO-iter & NEO-lrsdp & NEO-CC-M & NEO-CC-RCM \\
\midrule
\multirow{4}{*}{ML1} & Trial 1-5& Table~\ref{tb:ipm} & 55.7 & 43.8 & 44.2 & 56.3 & 56.4 & 58.1 & {\bf 58.4} \\ \cline{2-10}
& best& 37.0 & 55.7 & 43.8 & 44.2 & 56.3 & 56.4 & 58.1 & {\bf 58.4} \\
& worst& 11.8 & 55.7 & 43.8 & 44.2 & 56.3 & 56.4 & 58.1 & {\bf 58.4} \\
& avg.$\pm$std & 22.4$\pm$10.8 & 55.7$\pm$0 & 43.8$\pm$0 & 44.2$\pm$0 & 56.3$\pm$0 & 56.4$\pm$0 & 58.1$\pm$0 & 58.4$\pm$0\\ \midrule
\multirow{8}{*}{ML2} & Trial 1& 31.2 & 53.3 & 50.2 & 49.8 & 56.8 & 56.8 & {\bf 58.1} & 57.7 \\
& Trial 2& 36.2 & 53.3 & 50.5 & 48.2 & 56.8 & 56.8 & {\bf 58.8} & 57.2 \\
& Trial 3& 18.6 & 53.3 & 50.6 & 50.2 & 56.8 & 56.8 & {\bf 58.1} & 58.0 \\
& Trial 4& 25.4 & 53.3 & 50.5 & 50.5 & 56.8 & 56.8 & {\bf 58.6} & 56.6 \\
& Trial 5& 22.0 & 53.3 & 50.5 & 48.3 & 56.8 & 56.8 & {\bf 58.7} & 57.8 \\ \cline{2-10}
& best & 36.2 & 53.3 & 50.6 & 50.5 & 56.8 & 56.8 & {\bf 58.8} & 58.0 \\
& worst & 18.6 & 53.3 & 50.2 & 48.2 & 56.8 & 56.8 & {\bf 58.1} & 56.6 \\
& avg.$\pm$std & 26.7$\pm$7.1 & 53.3$\pm$0 & 50.5$\pm$0.1 & 49.4$\pm$1.1 & 56.8$\pm$0 & 56.8$\pm$0 & 58.4$\pm$0.3 & 57.5$\pm$0.5 \\
\midrule
\multirow{8}{*}{ML3} & Trial 1& 37.4 & 55.3 & 47.2 & 46.3 & 57.0 & 56.8 & 58.5 & {\bf 59.7} \\
& Trial 2& 40.5 & 55.3 & 47.4 & 46.4 & 57.0 & 56.8 & 58.4 & {\bf 61.5} \\
& Trial 3& 25.9 & 55.3 & 47.3 & 46.4 & 57.0 & 56.8 & 58.6 & {\bf 61.5} \\
& Trial 4& 38.4 & 55.3 & 47.4 & 46.2 & 57.0 & 56.8 & 58.6 & {\bf 59.4} \\
& Trial 5& 31.3 & 55.5 & 47.1 & 46.4 & 59.1 & 59.3 & {\bf 59.8} & 57.2 \\ \cline{2-10}
& best & 40.5 & 55.5 & 47.4 & 46.4 & 59.1 & 59.3 & 59.8 & {\bf 61.5} \\
& worst & 25.9 & 55.3 & 47.1 & 46.2 & 57.0 & 56.8 & {\bf 58.4} & 57.2 \\
& avg.$\pm$std & 34.7$\pm$6.0 & 55.4$\pm$0.1 & 47.3$\pm$0.2 & 46.3$\pm$0.1 & 57.4$\pm$0.9 & 57.3$\pm$1.1 & 58.8$\pm$0.6 & 59.9$\pm$1.8 \\
\bottomrule
\end{tabularx}}
\end{table*}

\begin{table}
\caption{$F_1$ scores (\%) of IPM on ML1.}
\label{tb:ipm}
\begin{tabularx}{1.04\linewidth}{XXXXXX}
\toprule
Method & Trial 1 & Trial 2 & Trial 3 & Trial 4 & Trial 5  \\ \midrule
IPM & 37.0 & 29.5 & 12.7 & 21.1 & 11.8 \\
\bottomrule
\end{tabularx}
\end{table}

We test the clustering performance on user-movie ratings datasets from MovieLens~\cite{movie}. (MovieLens is a web site where users rate some movies they watched and the system recommends movies for the users based on collaborative filtering.) In a user-movie ratings data matrix, the element of the $i$-th row and the $j$-th column indicates the rating of the $j$-th movie from the $i$-th user. A user can rate each movie by assigning an integer value from one to five. In the dataset, we have information about the genres of the movies (e.g., action, romance, comedy, mystery, etc.). By treating the genres as clusters~\cite{mocc}, we can create the ground-truth clusters for the movies. Since a movie usually belongs to multiple genres, there exists overlaps among the clusters. We create three different datasets (denoted by ML\#) by selecting different kinds of genres. ML1 contains 44,269 ratings, ML2 contains 28,335 ratings, and ML3 contains 30,769 ratings.

We compare the clustering performance by comparing the algorithmic clusters and the ground-truth clusters. Since we have the ground-truth clusters only for the movies, we focus on matching these ground-truth clusters with the algorithmic column clusters of the data matrix. We repeat the experiments for 5 times on each dataset, and compute $F_1$ scores to measure the similarity between the algorithmic clusters and the ground-truth clusters (see~\cite{whang-sdm2015} to check how to compute the $F_1$ scores). Higher $F_1$ score indicates better clusters. In Table~\ref{tb:f1}, we show the $F_1$ scores (\%) of each method for each trial and also present the best, the worst, the average and the standard deviation of the scores. On ML1 dataset, all the methods except IPM produce identical results for the five trials. For the IPM results on this dataset, see Table~\ref{tb:ipm}. Figure~\ref{fig:ml123} shows the $F_1$ scores of the best baseline method (NEO-lrsdp) and the NEO-CC algorithm on the three ML datasets.

\begin{figure}[t]
\centering
\begin{minipage}[b]{1\linewidth}\centering
  \centering
  \begin{tabular}{cc}
  \subfloat[The ML1 and ML3 datasets]{\includegraphics[width=0.48\textwidth]{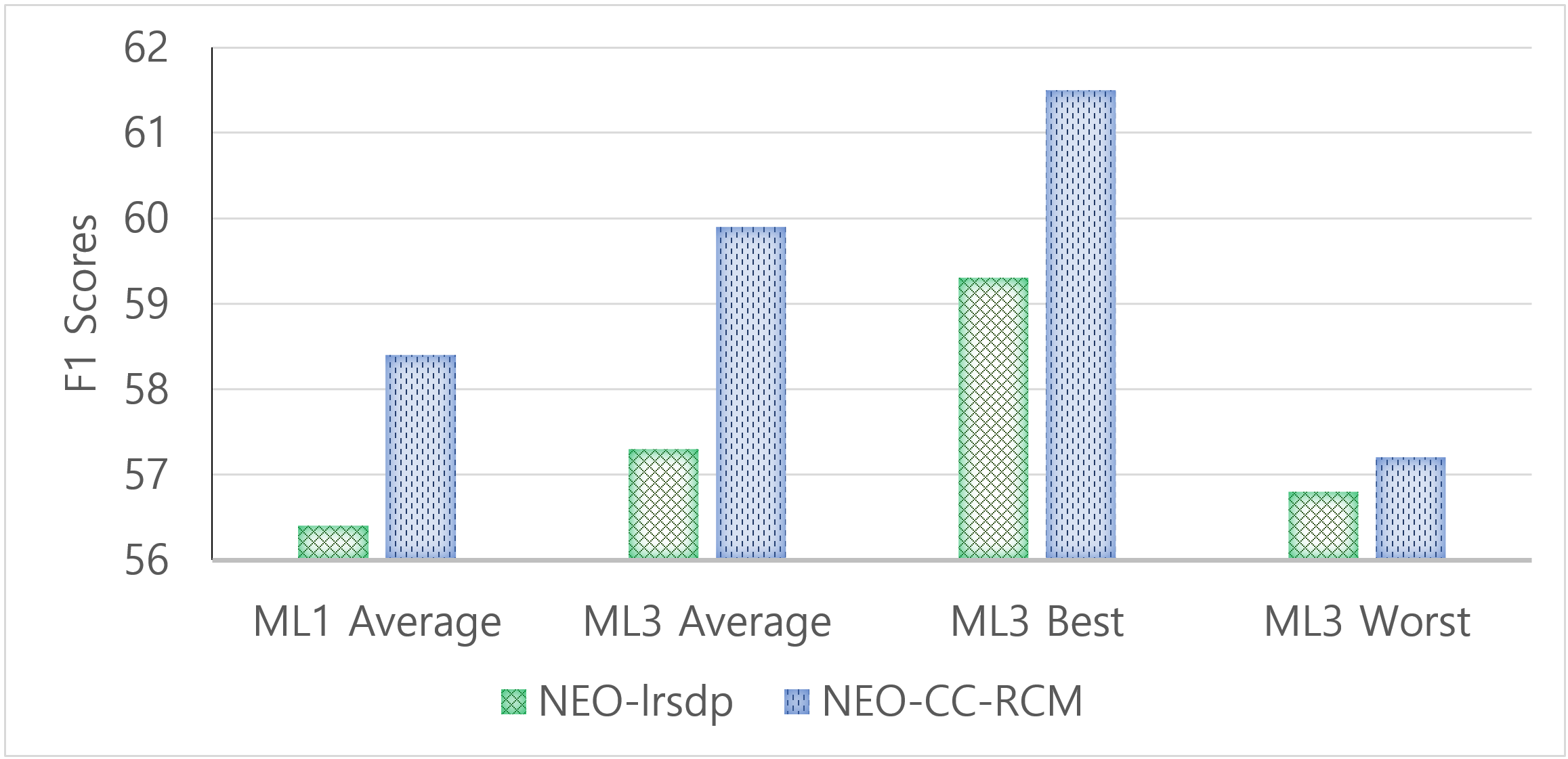}} &
  \subfloat[The ML2 dataset]{\includegraphics[width=0.48\textwidth]{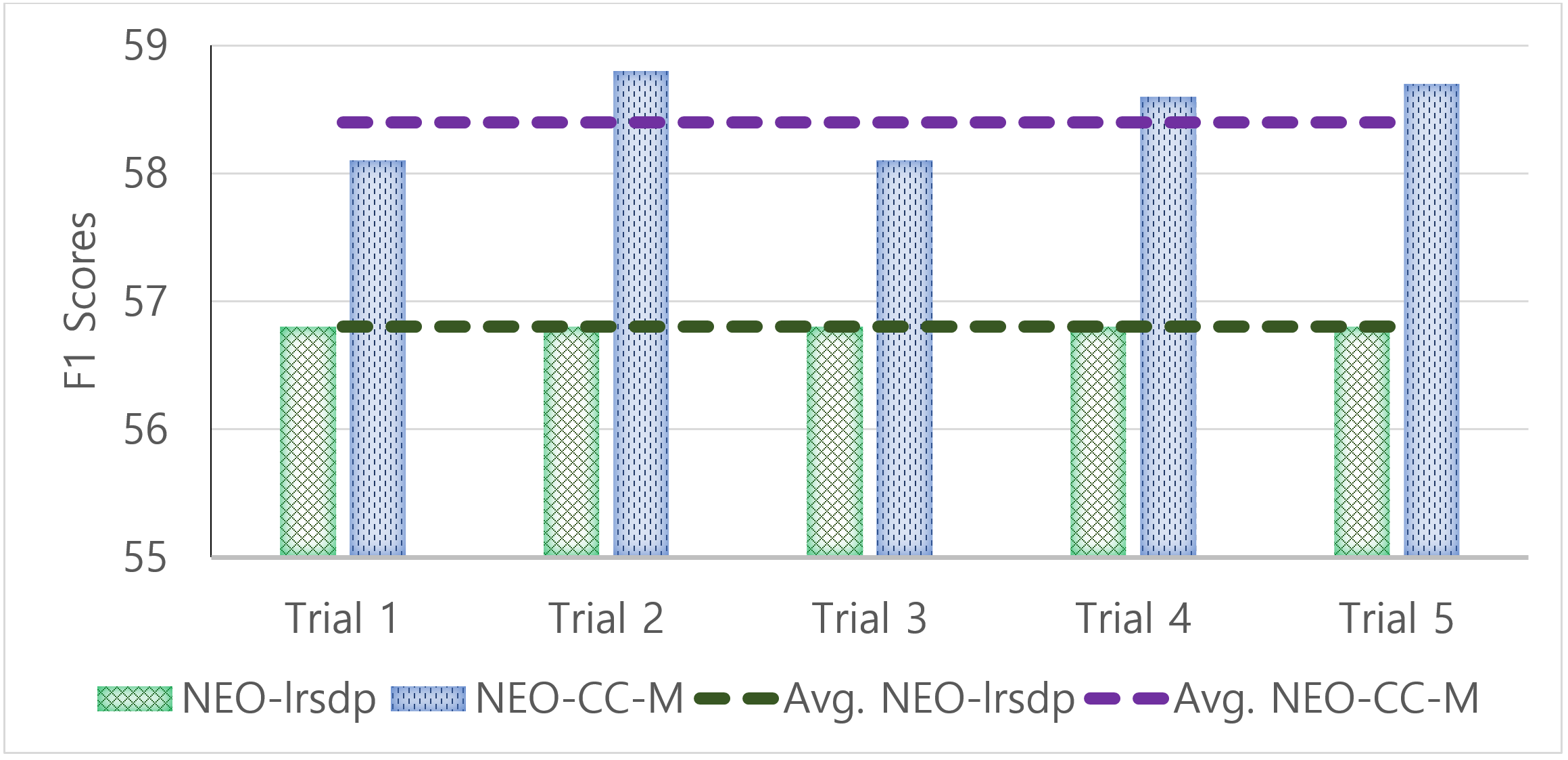}} 
  \end{tabular}
\end{minipage}
\caption{$F_1$ scores of the best baseline method (NEO-lrsdp) and the NEO-CC algorithm on the three ML datasets.}
\label{fig:ml123}
\end{figure}

\begin{table*}[t]
\caption{$F_1$ scores (\%) on the yeast gene expression dataset.}
\label{tb:yeast}
\begin{tabularx}{1.04\linewidth}{lXXXXXXXXX}
\toprule
Method & Trial 1 & Trial 2 & Trial 3 & Trial 4 & Trial 5 & best & worst & avg. & std. \\ \midrule
IPM & N/A & N/A & N/A & N/A & N/A & N/A & N/A & N/A & N/A\\
ROCC & 14.9 & 12.8 & 15.0 & 14.2 & 14.9 & 15.0 & 12.8 & 14.3 & 0.9 \\
MSSR1 & 17.1 & 17.0 & 16.5 & 16.4 & 17.4 & 17.4 & 16.4 & 16.9 & 0.4\\
MSSR2 & 19.3 & 18.2 & 18.5 & 18.5 & 18.0 & 19.3 & 18.0 & 18.5 & 0.5\\
NEO-iter & 35.9 & 36.6 & 36.0 & 36.0 & 35.6 & 36.6 & 35.6 & 36.0 & 0.3\\
NEO-lrsdp & 39.0 & 39.1 & 39.0 & {\bf 39.1} & 39.1 & 39.1 & {\bf 39.0} & 39.1 & 0\\
NEO-CC-M & {\bf 40.6} & {\bf 40.6} & {\bf 40.6} & 36.2 & {\bf 40.7} & {\bf 40.7} & 36.2 & {\bf 40.0} & 2\\
NEO-CC-RCM & 36.0 & 37.9 & 35.9 & 35.6 & 38.1 & 38.1 & 35.6 & 36.7 & 1.2\\
\bottomrule
\end{tabularx}
\end{table*}

We see that the NEO-CC algorithm (both NEO-CC-M and NEO-CC-RCM) outperforms all the baseline methods. Among the baseline methods, NEO-lrsdp achieves the highest $F_1$ score. Since the underlying ground-truth clusters contain overlaps between the clusters, the accuracies of the MSSR methods are relatively low (MSSR generates pairwise disjoint co-clusters). We note that the IPM method underperforms even though the method is able to produce non-exhaustive, overlapping co-clusters. It is interesting to see that the performance of NEO-CC is even better than NEO-lrsdp because we know that the NEO-lrsdp method involves much more expensive operations than the NEO-CC method which is a simple iterative co-clustering algorithm. 

\subsection{Yeast Gene Expression Data}
It has been known that co-clustering is a useful tool for analyzing gene expression data. We get an yeast gene expression dataset from~\cite{yeast}. In this data matrix, each row represents a gene, and each column represents an expression level under a particular biological condition. By clustering or co-clustering this gene expression data, we can group genes with similar functions~\cite{eisen},~\cite{cheng-ismb00}. The yeast dataset has been studied in the context of multi-labelled classification, and is known to be a difficult dataset~\cite{yeast}. In this dataset, each gene can belong to multiple functional classes. For example, a gene can be associated with cell growth, cell division, and cell organization. We treat each functional class as a ground-truth cluster. There are 2,417 genes, 103 features (expressions), and 14 functional classes.

Table~\ref{tb:yeast} shows the $F_1$ scores of the methods on the yeast dataset. The performance of MSSR is not good because the dataset contains a large overlap among the clusters. The IPM method failed to process the yeast dataset. We see that the NEO-* methods significantly outperform the other methods (IPM, ROCC, MSSR1, and MSSR2).  Overall, the NEO-CC-M method shows the best performance. The second best method is NEO-lrsdp. Among five trials, NEO-CC-M outperforms NEO-lrsdp four times. But, we note that NEO-lrsdp is a highly-tuned one-way clustering method whereas the NEO-CC-M is a simple iterative algorithm. We expect that we can further improve the performance of the NEO-CC algorithm by adapting an SDP-based approach, which is one of our future works.

\section{Conclusions \& Future Work}
We formulate the non-exhaustive, overlapping co-clustering problem and propose intuitive objective functions to solve the problem. To optimize the objectives, we develop a simple iterative algorithm which we call the NEO-CC algorithm. We prove that the NEO-CC algorithm monotonically decreases our new co-clustering objective functions. Experimental results show that our NEO-CC algorithm is able to identify qualitatively better clusters than other state-of-the-art co-clustering and one-sided clustering methods. The NEO-CC method provides a principled way to effectively capture the underlying co-clustering structure of real-world data. We plan to extend our NEO-CC method to solve various types of clustering tasks~\cite{whang-pvldb2020} and develop more scalable solution procedures to allow the NEO-CC idea to be also utilized in large-scale graph analysis tasks~\cite{whang-icdm2012},~\cite{whang-tkde2016}.

\section*{Acknowledgments}
This research was supported by Basic Science Research Program through the NRF of Korea funded by MOE(2016R1D1A1B03934766 and NRF-2010-0020210) and by the National Program for Excellence in SW supervised by the IITP(2015-0-00914), Korea to JW, and by NSF grants CCF-1320746 and IIS-1546452 to ID. J. Whang is the corresponding author.

\bibliographystyle{abbrv}
\bibliography{ref_neocc}

\end{document}